\documentclass[accepted]{article}

\usepackage{graphicx} 
\usepackage{subfigure}
\usepackage{natbib}

\usepackage{algorithm}
\usepackage{algorithmic}

 \usepackage{relsize}
\usepackage{wrapfig}
\usepackage{balance}
\usepackage{array}
\usepackage{amsmath}
\usepackage{amssymb}
\usepackage{amsthm}
\usepackage{todonotes}
\usepackage{dsfont}

\newtheorem{lemma}{Lemma}
\newtheorem{theorem}{Theorem}
\newtheorem{definition}{Definition}
\newtheorem{corollary}{Corollary}

\newcommand{\size}{w}
\renewcommand{\P}{{\mathbb P}}
\newcommand{\R}{I\!\! R}
\newcommand{\NN}{{\mathbb N}}
\newcommand{\calL}{{\cal L}}
\newcommand{\node}[2]{\circ[#1,#2]}
\newcommand{\1}{\mathds{1}}
\newcommand{\E}{\mathbb{E}}
\newcommand{\T}{{\cal T}}
\newcommand{\X}{{\cal X}} 
\newcommand{\eqdef}{\stackrel{{\rm def}}{=}}
\newcommand{\StoSOO}{{\tt StoSOO}}

\DeclareMathOperator*{\argmax}{arg\,max}
\DeclareMathOperator*{\argmin}{arg\,min}

\newcommand{\specialcell}[2][c]{%
  \begin{tabular}[#1]{@{}c@{}}#2\end{tabular}}

\usepackage{hyperref}

\usepackage[accepted]{icml2013} 

\icmltitlerunning{Stochastic Simultaneous Optimistic Optimization}

\begin{document} 

\twocolumn[
\icmltitle{Stochastic Simultaneous Optimistic Optimization}

\vspace*{-0.5em}
\icmlauthor{Michal Valko}{michal.valko@inria.fr\hspace*{-0.3em}}
\vspace*{-0.1em}
 \icmladdress{INRIA Lille - Nord Europe, SequeL team, 40 avenue Halley 59650, Villeneuve d'Ascq, France}
\vspace*{-0.2em}
\icmlauthor{Alexandra Carpentier}{a.carpentier@statslab.cam.ac.uk}
\vspace*{-0.1em}
\icmladdress{Statistical Laboratory, CMS, Wilberforce Road, CB3 0WB, University of Cambridge, United Kingdom}
\vspace*{-0.2em}
\icmlauthor{R\' emi Munos}{remi.munos@inria.fr}
\vspace*{-0.1em}
\icmladdress{INRIA Lille - Nord Europe, SequeL team, 40 avenue Halley 59650, Villeneuve d'Ascq, France}
\vspace*{-0.2em}

\icmlkeywords{StoSOO, optimistic optimization, bandit theory, machine learning, ICML}

\vskip 0.3in
]
\vspace{-1em}
\begin{abstract}
We study the problem of global maximization of a function $f$
given a finite number of evaluations perturbed by noise.
We consider a very weak assumption on the function, namely that it is locally
smooth (in some precise sense)  with respect to some semi-metric, around one
of its global maxima.
Compared to previous works on bandits in general spaces
\cite{kleinberg2008multi,bubeck2011x} our algorithm does not require the
knowledge of this semi-metric.
Our algorithm, \StoSOO{}, follows an optimistic strategy to iteratively
construct
upper confidence bounds over the hierarchical partitions of the function domain
to decide which point to sample next.
A finite-time analysis of \StoSOO{} shows that it performs almost as well as the
best specifically-tuned algorithms even though the local smoothness of the
function
is not known.
\end{abstract}
\section{Introduction}
We consider a function maximization problem of an unknown function $f: \cal X
\to \mathbb{R}$. We assume that every function evaluation
is costly, and therefore we are interested in optimizing the function given a
finite budget of $n$ evaluations. Moreover, the evaluations are perturbed by
noise, i.e.,\ the evaluation of $f$ at a point $x_t\in {\cal X}$ returns a noisy
evaluation $r_t$, assumed to be independent from the previous ones, such that:
\begin{align}\label{ass:noise}
 \E[r_t|x_t]=f(x_t).
\end{align}

One motivation for this setting is a measurement error when dealing with a
stochastic environment.  Another example is the optimization
of some parametric policy operating in a stochastic system.

We assume that there exists at least one global maximizer $x^*\in\X$ of $f$, i.e.\ $f(x^*)=
\sup_{x\in\X} f(x)$.
We aim for an algorithm
which sequentially evaluates $f$ at points
 $x_1, x_2, \dots, x_n$ in the search space $\cal X$
 to find a good approximation to a global maximum.
After $n$ function evaluations the algorithm outputs a point $x(n)$ and its
performance is measured with the loss:
\begin{align}\label{def:regret}
 R_n = \sup_{x \in {\cal X}}(f(x)) - f(x(n))
\end{align}
Our definition of loss is very related
to the \textit{simple regret} in multi-armed bandits~\cite{bubeck2009pure}.
Many algorithms have been developed for this general optimization problem.
However, a lot  of them require some
assumption on the \textit{global} smoothness
of $f$, most typically, they assume a global \textit{Lipschitz}
property
\cite{pinter1995global,strongin2000global,hansen2004global,kearfott1996rigorous,
neumaier2008interval}.
There has been also an interest in designing
sample-efficient strategies, only
requiring \textit{local} smoothness around (one) of the global
maxima~\cite{kleinberg2008multi,bubeck2011x,munos2011optimistic}.
However, these approaches still assume the \textit{knowledge} of this
smoothness, i.e., the metric under which the function is smooth, which may not
be available to the optimizer.

Recently, \citet{munos2011optimistic} proposed the SOO algorithm
for \textit{deterministic} optimization, that
assumes that $f$ is locally smooth with respect to some semi-metric $\ell$, but
that this semi-metric \textit{does not need to be known} to the algorithm. SOO
extends the DIRECT algorithm \cite{jones1993lipschitzian} and other Lipschitz
optimization without the knowledge of the Lipschitz constant
\cite{bubeck2011lipschitz,slivkins2011multi-armed} to the case of any possible
semi-metric by \textit{simultaneously} considering the subspaces that can contain
the optimum.

In this paper, we provide an extension of SOO to the case of noisy evaluations,
which we call Stochastic SOO, or \StoSOO{}. One major difference from SOO is that
we cannot base our exploration strategy only on a single evaluation per cell
since we are dealing with stochastic functions.
Another difference is that we cannot simply return
the highest evaluated point we encountered as $x(n)$
since it is subject to noise.
Our analysis shows that in a large class of
functions (precisely defined in Section~\ref{sec:case.d=0}), the loss of \StoSOO{}
is $\tilde O(n^{-1/2})$, which is of same order as the loss of HOO
\cite{bubeck2011x} or Zooming algorithm \cite{kleinberg2008multi} when using the
best possible metric.

\section{Background}
\label{sec:bck}

\textit{Optimistic optimization} refers to approaches
that implement the \textit{optimism in the face of uncertainty} principle.
This principle became popular in the multi-armed bandit problem~\cite{auer2002finite}
and was later extended to the tree search~\cite{kocsis2006bandit,coquelin2007bandit}
where it is referred to as \textit{hierarchical bandit approach}. The reason is
that a complex problem such as global optimization
of the space $\X$  is treated
as a hierarchy of simple bandit problems.
It is therefore an example of Monte Carlo tree search
which was shown to be empirically successful for
instance in computer Go~\cite{gelly2012grand}.

Optimistic optimization was also used in many other domains, such as
planning~\cite{hren2008optimistic,bubeck2011x}
or Gaussian process optimization~\cite{srinivas2009gaussian}.
This paper applies optimistic approach
to a global \textit{black-box} function optimization.
Table~\ref{tab:hoo} displays  representative
approaches for this setting. The case when the smoothness of the function $f$ is known,
means that the function is either (globally) Lipschitz, weakly Lipschitz or locally Lipschitz around the optimum.
There are numerous algorithms for this setting, the most related to our work are DOO~\cite{munos2011optimistic}
for the deterministic case and Zooming~\cite{kleinberg2008multi} or HOO~\cite{bubeck2011x}
for the stochastic one\footnote{Note that the loss~\eqref{def:regret} considered
here is different but related to the usual cumulative regret defined in the
bandit setting, see e.g.~\cite{bubeck2009pure}.}.
This setting has been also considered in a Bayesian framework,
in particular the \emph{expected-improvement} strategy~\cite{osborne2010bayesian}
which was theoretically analyzed when the assumption of smoothness is
data-driven~\cite{bull2011convergence}.

One of the disadvantages of these algorithms is that
however strong or mild are the assumptions on $f$,
the quantities that express them (i.e.\ a prior, a Lipschitz constant, or a
semi-metric in DOO) need to be \textit{known} to the algorithm. On the other
hand, for the case of deterministic functions there exist approaches that do not
require this knowledge, such as DIRECT or SOO.

However, neither DIRECT nor SOO can deal with
stochastic functions. Therefore, we extend the SOO algorithm to the stochastic setting
and provide a finite-time analysis of its performance.

\begin{table}
\begin{center}
\caption{Hierarchical optimistic optimization algorithms}
\label{tab:hoo}
\def\arraystretch{1.5}
 \begin{tabular}[r]{|r|cc|} \hline
 & \textbf{deterministic} & \textbf{stochastic} \\ \hline
\specialcell{\textbf{known} \\[-0.8em] smoothness} & DOO & Zooming
or HOO \\
\hline
\specialcell{\textbf{unknown} \\[-0.8em] smoothness} & DIRECT or SOO &
\specialcell{ \textbf{\StoSOO{}} \\[-1em] \begin{small}this paper \end{small}} \\
\hline
\end{tabular}
\end{center}
\end{table}
\section{Algorithm}
\label{sec:algo}
\StoSOO{} is a tree-search based algorithm that iteratively constructs
finer and finer partition of the search space ${\cal X}$.
The partitions are represented as nodes of a $K$-ary tree $\T$ and the nodes
are organized by their depths $h \geq 0$, with $h=0$ being
the root node, and indexed by $1\leq i\leq K^h$. We denote $\node{h}{i}$, the
$i$-th node
at depth $h$. Each of the nodes $\node{h}{i}$ corresponds to a cell
$\X_{h,i}\subseteq {\cal X}$ in the
partitioning, i.e.,~to a subset of $\X$ with an associated representative point
 $x_{h,i}\in \X_{h,i}$.
\subsection{Assumptions}
We now state our main assumption, which is also used in
SOO~\cite{munos2011optimistic}.
The first part of the assumption is about the existence of a semi-metric $\ell$
such that the function $f$ is locally smooth with respect to it. We stress that
although it quantifies the smoothness of $f$,
it {\bf only requires the existence} of $\ell$ and \textit{not} the
\textit{knowledge} of it. For illustrative examples and
discussion on this part we refer the
reader to~\cite{munos2011optimistic}.
The second part is about the structure of the
hierarchical partitioning with respect to $\ell$.
This partitioning is fixed and given to the algorithm
as a parameter.


\paragraph{Assumption}
There exists a semi-metric $\ell:\X\times\X\rightarrow\R^+$ (i.e.\ for
$x,y\in\X$, we have
$\ell(x,y)=\ell(y,x)$ and $\ell(x,y)=0$ if and only if
$x=y$) such that:
\begin{itemize}
\label{ass:A1-2}
\item [{\bf A1} \label{ass:A1}]
 
{\bf (local smoothness of $f$):} For all $x\in\X$:
\begin{equation}
\label{ass:f}
f(x^*)-f(x) \leq \ell(x,x^*).
\end{equation}
\item [{\bf A2} \label{ass:A2}]  {\bf (bounded diameters and well-shaped
cells):}
There exists a decreasing sequence $\size(h)>0$, such that for any depth
$h\geq0$ and  for any cell $\X_{h,i}$ of depth $h$, we have
$\sup_{x\in X_{h,i}} \ell(x_{h,i},x)\leq \size(h).$
Moreover, there exists $\nu >0$ such that for any depth $h\geq 0$, any cell
$\X_{h,i}$ contains a $\ell$-ball of radius $\nu \size(h)$ centered in
$x_{h,i}$.
\end{itemize}

Assumption \hyperref[ass:A1]{A1} guarantees that $f$ does not
decrease too fast around
one global optimum $x^*$. This can be thought of as a one-sided local Lipschitz
assumption.
Note that although we require that \eqref{ass:f} is satisfied
for all $x\in \X$, this assumption essentially sets constraints to the function
$f$ locally around $x^*$, since when $x$ is such that
$\ell(x,x^*)>
\sup f - \inf f$, then the assumption is automatically satisfied. Thus when
this property holds, we say that {\bf $f$ is locally smooth with
respect to~$\ell$ around
its maximum}.

Assumption \hyperref[ass:A2]{A2} assures the regularity
of the partitioning, in particular that the size of the cells decreases
with their depths and that their shape is not skewed in some dimensions.

\subsection{Stochastic SOO}

Algorithm~\ref{alg:StoSOO} displays
the pseudo-code of the \StoSOO{} algorithm. The algorithm operates in the
traversals
of the tree starting from the root down to the
current depth$(\T)$, that is upper bounded by $h_{\max}$, a parameter of the
algorithm.
During each traversal (a whole pass of the ``for'' cycle) \StoSOO{} selects
a set of promising nodes, at most one per depth $h$. These nodes are then
either \textit{evaluated} or \textit{expanded}.

\begin{algorithm}[ht]
\begin{algorithmic}
\STATE {\bf Parameters:} number of function evaluations $n$, maximum number of
evaluations per node $k>0$, maximum depth $h_{\max}$, and $\delta>0$.
\STATE \textbf{Initialization:}
\STATE \quad $\T \gets \{\node{0}{0}\}$ \COMMENT{root node}
\STATE \quad   $t \gets  0$ \COMMENT{number of evaluations}
\WHILE{$t\leq n $}
\STATE $b_{\max} \gets -\infty$
\FOR{$h=0$ to $\min($depth$(\T), h_{\max})$}
  \IF{$t\leq n$}
  \STATE For each leaf $\node{h}{j} \in \calL$, compute its $b$-value:
  \STATE \quad $b_{h,j}(t) = \hat\mu_{h,j}(t) + \sqrt{\log(nk/\delta)/(2
T_{h,j}(t))}$
  \STATE Among leaves $\node{h}{j}\in\calL_t$ at depth $h$, select
$$\node{h}{i}\in\argmax_{\node{h}{j}\in\calL} b_{h,j}(t)$$
  \IF{$b_{h,i}(t) \geq b_{\max}$}
     \IF{$T_{h,i}(t) < k$}
     \STATE Evaluate (sample) state $x_t=x_{h,i}$.
      \STATE Collect reward $r_t$ (s.t.~$\E[r_t|x_t]=f(x_t)$).
     \STATE  $t\gets t+1$
     \ELSE[i.e.\ $T_{h,i}(t)\geq k$, expand this node]
       \STATE Add  the $K$ children of $\node{h}{i}$ to $\T$
       \STATE $b_{\max} \gets b_{h,i}(t)$
     \ENDIF
    \ENDIF
  \ENDIF
\ENDFOR
\ENDWHILE
\STATE \textbf{Output:} The representative point with the
highest $\hat\mu_{h,j}(n)$ among the deepest expanded nodes:
$$x(n) = \argmax_{x_{h,j}} \hat\mu_{h,j}(n) \mbox{ s.t. } h =
\mbox{depth}(\T\setminus \calL).$$
\end{algorithmic}
\caption{\StoSOO{} \\ \hspace*{0.55cm}
\textit{Stochastic Simultaneous Optimistic Optimization}
\label{alg:StoSOO}}
\end{algorithm}

Evaluating a node at time $t$
means sampling the
function in the representative point $x_{h,i}$ of the cell $\X_{h,i}$ and observing the
evaluation $r_t$ according to \eqref{ass:noise}.
Expanding a node $\node{h}{i}$, means splitting its corresponding
cell into its $K$ sub-cells corresponding to the children:
$$\{\node{h+1}{i_1},\node{h+1}{i_2},\dots,\node{h+1}{i_K}\}.$$
We denote by $\calL$ the set of leaves in $\T$, i.e.\ the nodes with no
children.
At any time, only the leaves are eligible for an evaluation or expansion
and we never expand the leaves beyond  depth $h_{\max}$.
If the function $f$ were deterministic, such as in
SOO~\cite{munos2011optimistic}, we would expand (simultaneously) any leaf
$\node{h}{i}$
whose value $f(x_{h,i})$ is the largest among all leaves of the same or a lower
depth.
The reason for this choice is that by
Assumption~\hyperref[ass:A1]{A1} all such nodes may contain $x^*$.
Unfortunately, we do not receive
$f(x_{h,i})$,
but only a noisy estimate $r_t$.
Therefore, the main algorithmic idea of \StoSOO{} is
to evaluate the leaves several times in order to build a confident
estimate of $f(x_{h,i})$. For this purpose,
let us define $\hat\mu_{h,i}(t)=\frac{1}{T_{h,i}(t)}\sum_{s=1}^t r_s
 \1\{x_s\in
\X_{h,i}\}$ the empirical average of rewards obtained at state $x_{h,i}$ at time
$t$, where $T_{h,i}(t)$ is the number of
 times that $\node{h}{i}$ has been sampled up to time $t$.

\StoSOO{} builds an accurate estimate of $f(x_{h,i})$ before $\node{h}{i}$ is
expanded.
To achieve this, we define an upper confidence bound (or a $b$-value) for
each node $\node{h}{i}$ as:
\begin{align}\label{eq:b-val.SSOO}
b_{h,i}(t)\eqdef \hat\mu_{h,i}(t) + \sqrt{\frac{\log(nk/\delta)}{2
T_{h,i}(t)}},
\end{align}
where $\delta$ is the confidence parameter.
In the case of $T_{h,i}(t) = 0$, we let $b_{h,i}(t)= \infty$.
 We refer to $\sqrt{\log(nk/\delta)/2 T_{h,i}(t)}$
 as to the \emph{width} of the estimate.
Now instead of selecting the promising nodes
according to their values $f(x_{h,i})$ we  select them according to their
$b$-values $b_{h,i}$.

Our algorithm is \textit{optimistic} since it considers such leaves for the
selection
whose $b$-value is \textit{maximal} among leaves at  depth $h$ or lower
depths,
since those leaves are likely
to contain the optimum $x^*$ at time $t$, given the observed samples and
Assumption~\hyperref[ass:A1]{A1} on~$f$.

The important question is now how many times should we
evaluate the node before we decide to expand it.
Again, if we knew the semi-metric $\ell$ we would be able to calculate
the appropriate count for each depth $h$.
Since we do not know it, we instead
evaluate each node a fixed number of $k$ times before its expansion.
We address the setting of $k$, $h_{\max}$,  and $\delta$ in
Sections~\ref{sec:analysis} and~\ref{sec:case.d=0}.
Our analysis shows that under appropriate assumptions on $f$
(discussed in
Section~\ref{sec:case.d=0}) we can bound the expected regret as $\E{[R_n]} =
O\left(\log^2(n)/\sqrt{n}\right)$ by setting $k = n/\log^3(n)$, $h_{\max} =
\sqrt{n/k}$, and $\delta = 1/\sqrt{n}$.

In the algorithm, we keep track of the number of evaluations $t$ in order to
finish
when it reaches $n$, the maximum number of evaluations, i.e., the budget.
Since we are facing a stochastic setting, we cannot simply
output the value that received the highest reward during $n$ evaluations, as
it is the case in most of the deterministic approaches.
Instead, we return the representative point $x_{h,j}$ of the node with the highest estimate
$\hat\mu_{h,j}(n)$ among the deepest expanded nodes, i.e.,\ such that
$h=\mbox{depth}(\T\setminus \calL)$.

\section{Analysis}
\label{sec:analysis}

In this section we analyze the performance
of \StoSOO{} and upper bound the loss \eqref{def:regret}
as a function of the number of evaluations.
We assume that the rewards are bounded\footnote{The analysis can
be easily extended to the case when the noise is sub-Gaussian.}
 by $|r_t|
\leq 1 $ for any $t$.
In order to derive a loss bound we define a measure of the quantity of
near-optimal states, called {\em near-optimality dimension}. This measure is
closely related to similar measures~\cite{kleinberg2008multi,bubeck2008online}.
For
any $\varepsilon>0$, let us write the set of $\varepsilon$-optimal states as:
$$\X_\varepsilon \eqdef \{ x\in\X, f(x)\geq f^* - \varepsilon\}.$$

\begin{definition}\label{def:near-opt}
The $\nu$-near-optimality dimension is the smallest $d\geq 0$ such that there
exists $C>0$ such that for any $\varepsilon>0$, the maximum number of disjoint
$\ell$-balls of radius $\nu \varepsilon$ and center in $\X_\varepsilon$ is less
than
$C\varepsilon^{-d}$.
\end{definition}
\StoSOO{} maintains the upper confidence bounds ($b$-values) for each cell in
order to decide which cell to sample or expand. We start by quantifying 
the probability that all the average estimates $\hat\mu_{h,j}(t)$ are at any
time $t$ within those $b_{h,j}(t)$-values. For this purpose
we define the event in which this occurs and then show that this event
happens with high probability.

\begin{lemma}\label{lemma:xi}
Let $\xi$ be the event under which all average estimates
are within their widths:
\begin{align*}
\xi\eqdef\Big\{ &\forall h,i,t \mbox{ s.t. } h\geq 0, 1\leq i<K^h,
1\leq t\leq n, \mbox{and}
\notag\\
& T_{h,j}(t) > 0: \big| \hat\mu_{h,j}(t) - f(x_{h,j}) \big| \leq
\sqrt{\frac{\log(nk/\delta)}{2T_{h,j}(t)}} \Big\},
\end{align*}
then $\P(\xi)\geq 1-\delta$.
\end{lemma}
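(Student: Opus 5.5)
The plan is a union bound over (node, number of samples) pairs, combined with Hoeffding's inequality applied to a fixed-length sequence of i.i.d.\ rewards. The key observation is that the empirical mean $\hat\mu_{h,j}(t)$ depends on $t$ only through $T_{h,j}(t)\in\{1,\dots,k\}$. A node is never sampled more than $k$ times: once $T_{h,i}(t)\ge k$ it is expanded, no longer a leaf, and never evaluated again. So instead of quantifying over times $t$, I will quantify over sample counts $s\in\{1,\dots,k\}$.

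First, I will handle the dependence on the random sampling times with a standard "skeleton" argument. For each node $\node{h}{j}$, imagine an i.i.d.\ sequence $Y^{h,j}_1,Y^{h,j}_2,\dots$ of rewards at $x_{h,j}$, with mean $f(x_{h,j})$ and values in $[-1,1]$. The $s$-th evaluation of that node returns $Y^{h,j}_s$; this reproduces the law of the algorithm's observations, since by \eqref{ass:noise} each evaluation is independent of the past. Then $\hat\mu_{h,j}(t)=\frac1s\sum_{u=1}^s Y^{h,j}_u$ with $s=T_{h,j}(t)$. The event in $\xi$ is therefore implied by
\[
\forall \node{h}{j}\text{ ever sampled},\ \forall s\le k:\ \Big|\tfrac1s\textstyle\sum_{u\le s}Y^{h,j}_u-f(x_{h,j})\Big|\le\sqrt{\log(nk/\delta)/(2s)}.
\]

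Second, I will apply Hoeffding to a fixed pair $(\node{h}{j},s)$. Since the rewards lie in an interval of length $2$, the two-sided deviation probability is at most $2\exp(-2s^2\varepsilon^2/(s\cdot 4))$. With $\varepsilon^2=\log(nk/\delta)/(2s)$ this gives $2(\delta/(nk))^{1/4}$. That falls short of the $\delta/(nk)$ the union bound needs, so the constants must be absorbed. There are two options: read the paper's bound as if rewards lie in an interval of unit length, which gives $2\delta/(nk)$; or accept a constant in the width. I will follow the paper's convention, where the width corresponds to unit-range noise, and write the bound as $\delta/(nk)$ up to this constant factor.

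Third, I will count pairs, which is the main obstacle. Quantifying over all $h\ge0$ and all $i$ gives infinitely many nodes, so the union must run only over nodes that actually receive samples. Each of the $n$ evaluations creates at most one new (node, count) pair, so at most $n$ nodes are ever sampled. These nodes are random, though, and to keep the bound clean I will index the union by the evaluation rank. Let $N_m$ be the $m$-th distinct node to be sampled, for $m\le n$. Conditionally on the history up to its first sampling, the future rewards $Y^{N_m}_u$ are still i.i.d.\ with mean $f(x_{N_m})$, because they are independent of everything that selected $N_m$. So Hoeffding applies to each $(m,s)$ with $m\le n$ and $s\le k$. A union bound over these $nk$ pairs gives $\P(\xi^c)\le nk\cdot\delta/(nk)=\delta$, with the factor $2$ and range constant treated as above. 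Nodes that are never sampled impose no constraint, since $\xi$ requires $T_{h,j}(t)>0$.
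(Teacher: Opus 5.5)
Your proposal is essentially the paper's own proof. The paper likewise indexes the sampled nodes by the order $\tau_i^1$ of their first sampling (at most $n$ of them) and uses that the rewards at the $i$-th new node are i.i.d.\ conditionally on which node it is; your reward-stack construction just makes that conditioning rigorous. It then applies Chernoff--Hoeffding to each pair $(i,u)$ with $u\le k$ and takes a union bound over the $nk$ pairs.

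The constant discrepancy you flag is genuine, and the paper does not address it either; it simply asserts a per-pair failure probability of $\delta/(nk)$. With $|r_t|\le 1$, the stated width does not support $\P(\xi)\ge 1-\delta$ as written. The honest conclusion is $1-2\delta$ for unit-range rewards, or a larger width for range $2$. So your treatment is no less rigorous than the paper's.
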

\begin{proof}
Let $m$ denote the (random) number of different nodes sampled by the algorithm
up to time $n$.
Let $\tau_i^1$ be the first time when the $i$-th new node $\node{H_i}{J_i}$ is
sampled, i.e., at time $\tau_i^1-1$ there are only $i-1$ different nodes that
have been sampled whereas at time $\tau_i^1$, the $i$-th new node
$\node{H_i}{J_i}$  is sampled for the first time. Let $\tau_i^s$, for $1\leq
s\leq T_{H_i,J_i}(n)$, be the time when the node $\node{H_i}{J_i}$ is sampled
for the $s$-th time. Moreover, we denote $Y_i^s = r_{\tau_i^s} -
f(x_{H_i,J_i})$. Using this notation, we rewrite $\xi$ as:
\begin{align}\label{eq:xi}
\xi = \Bigg\{ &\forall i,u \mbox{ s.t. }, 1\leq i \leq m,
1\leq u\leq T_{H_i, J_i}(n),
\notag\\
& \bigg| \frac{1}{u}\sum_{s=1}^u Y_s^i \bigg| \leq \sqrt{\frac{\log(n k/\delta)}{2u}} \Bigg\}.
\end{align}
Now, for any $i$ and $u$, the $(Y_i^s)_{1\leq s\leq u}$ are i.i.d.~from
some distribution $\nu_{H_i,J_i}$. The node $\node{H_i}{J_i}$ is random and
depends on the past samples (before time $\tau_i^1$) but the  $(Y_i^s)_s$ are
conditionally independent given this node and consequently:
\begin{align*}
\P&\Bigg( \bigg| \frac{1}{u}\sum_{s=1}^u Y_s^i \bigg| \leq \sqrt{\frac{\log(n
k/\delta)}{2u}}\Bigg) = \\
& =   \mathlarger{\E}_{\node{H_i}{J_i}}\, \P\bigg( \bigg|
\frac{1}{u}\sum_{s=1}^u Y_s^i \bigg| \leq \sqrt{\frac{\log(n k/\delta)}{2u }}
\ \Bigg| \node{H_i}{J_i} \bigg) \\
& \geq 1-\frac{\delta}{nk},
\end{align*}
using Chernoff-Hoeffding's inequality. We finish the proof
by taking a union bound over all values of $1\leq i\leq n$ and $1\leq u\leq k$.
\end{proof}
Lemma~\ref{lemma:xi} shows that when the leaf is expanded then
with high probability the mean estimate $\hat\mu_{h,j}(t)$
is very close to its true value. 
Specifically, when the node is expanded then 
with probability $1-\delta$ uniformly for all $h,j,$ and $t$, we have that:
\begin{align}\label{def:epsilon}
 |\hat\mu_{h,j}(t)-f(x_{h,j})|\leq \varepsilon,
\end{align}

where $\varepsilon = \sqrt{\log(nk/\delta)/2 k}$.
We use this lemma to 
show that the expanded nodes are with high probability 
close to optimal.


\begin{definition}\label{def:expansion-set}
Let the expansion set at  depth $h$ be the set of all
nodes that could be potentially expanded before the optimal
node at  depth $h$ is expanded:\footnote{The reason for such definition will become
apparent in the proof of Lemma~\ref{lem:depth.stosoo}.}
\[
I_h^\varepsilon \eqdef \{ \mbox{nodes } \node{h}{i} \mbox{ such that }
  f(x_{h,i}) + \size(h) + 2\varepsilon \geq f^* \}.
\]
\end{definition}
Recall that even though this definition uses $\size(h)$
that depends on the unknown metric $\ell$, the \StoSOO{} algorithm does not need to
know it. Now, let us denote $h^*_t$ the deepest depth of the
expanded node at time $t$, that contains the optimum $x^*$.
Notice that in general the algorithm may have at  time $t$
also expanded  some (suboptimal) nodes in the deeper depths.
In the following, we show that they are not too many of these.
Specifically, for each depth $h$, we lower bound the number 
of evaluations after which the $h^*_t$ needs to be at least~$h$.

\begin{lemma}\label{lem:depth.stosoo}
Let depth $h \in \{0,h_{\max}\}$ be any depth and: 
$$t_{h} \eqdef (k + 1)h_{\max}(|I_0^\varepsilon| + |I_1^\varepsilon| + \dots +
|I_h^\varepsilon|).$$
After we evaluated at least $t
\ge t_h$ nodes, then in the event $\xi$,
 the depth $h^*_t$ of the deepest node in  the optimal
branch is at least $h$, i.e.,\ $h^*_t \geq h$.
\end{lemma}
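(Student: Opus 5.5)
We argue by induction on $h$, mirroring the SOO analysis. For $h=0$ the root is the optimal node at depth $0$ and it is expanded once it has been evaluated $k$ times: the root is the only leaf at depth $0$, so it is selected in every traversal. After at most $k+1$ traversals it is expanded. Each traversal performs at most $h_{\max}+1$ operations, at most one per depth, so this is consistent with $t_0=(k+1)h_{\max}|I_0^\varepsilon|$. The root trivially belongs to $I_0^\varepsilon$, by A1 and A2 since $x^*\in\X_{0,0}$. Note also that the optimal node $\node{h}{i^*}$ at any depth always lies in $I_h^\varepsilon$: by A1 and A2, $f(x_{h,i^*})\ge f^*-\ell(x_{h,i^*},x^*)\ge f^*-\size(h)$.

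For the inductive step, assume that $h^*_t\ge h$ for all $t\ge t_h$, and take $t\ge t_{h+1}$. Let $s\le t_h$ be the time at which the optimal node $\node{h+1}{i^*}$ at depth $h+1$ becomes a leaf. Each traversal counts toward the budget through evaluations, so it suffices to show that during the $(k+1)h_{\max}|I_{h+1}^\varepsilon|$ evaluations after $t_h$, the algorithm must expand $\node{h+1}{i^*}$. The key claim is the following. In the event $\xi$, while $\node{h+1}{i^*}$ is still a leaf, any node at depth $h+1$ that gets selected and then evaluated or expanded belongs to $I_{h+1}^\varepsilon$. Indeed, the selected node $\node{h+1}{j}$ maximizes the $b$-value at depth $h+1$, so $b_{h+1,j}\ge b_{h+1,i^*}\ge f(x_{h+1,i^*})\ge f^*-\size(h+1)$, using $\xi$ and A1/A2.

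If $\node{h+1}{j}$ is expanded, then $T_{h+1,j}\ge k$, so its width is at most $\varepsilon$. On $\xi$ this gives $f(x_{h+1,j})\ge b_{h+1,j}-2\varepsilon\ge f^*-\size(h+1)-2\varepsilon$, so $\node{h+1}{j}\in I_{h+1}^\varepsilon$. This is exactly why the $2\varepsilon$ appears in Definition~\ref{def:expansion-set}.

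The main obstacle is that nodes with fewer than $k$ samples have large widths, so the same argument does not place them in $I_{h+1}^\varepsilon$. The accounting has to be done per traversal, not per node. In each traversal where depth $h+1$ acts, the selected leaf at that depth is either sampled (at most $k$ times per node before it must be expanded) or expanded. So every $k+1$ consecutive actions at depth $h+1$ on a given node end with that node's expansion. An expansion happens only for nodes of $I_{h+1}^\varepsilon$, and each such node can be expanded at most once. Moreover, depth $h+1$ is always active in a traversal unless a shallower node had a larger $b$-value, which costs at most a bounded amount per traversal.

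Hence, after at most $(k+1)|I_{h+1}^\varepsilon|$ traversals following $t_h$, the algorithm must have expanded $\node{h+1}{i^*}$. Each traversal uses at most $h_{\max}$ evaluations, so this happens within $(k+1)h_{\max}|I_{h+1}^\varepsilon|$ additional evaluations, which gives $t_{h+1}$.

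The delicate point I would verify carefully is the $b_{\max}$ blocking rule. If a shallower expansion sets $b_{\max}$ above $b_{h+1,i^*}$, depth $h+1$ does nothing in that traversal. I would handle this as in SOO: such a shallower expanded node has $b$-value at least $b_{h+1,i^*}\ge f^*-\size(h+1)$. Its own sample count is then at least $k$, so it lies in the corresponding $I_{h'}^\varepsilon$ for a shallower depth $h'$ (using that $\size$ is decreasing). These events are therefore already charged to the earlier terms $|I_{h'}^\varepsilon|$ in $t_{h+1}$.
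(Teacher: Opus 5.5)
Your proposal follows the same route as the paper's proof. Both argue by induction on $h$: a node $\node{h+1}{j}$ expanded while $\node{h+1}{i^*}$ is still a leaf has $T_{h+1,j}=k$, so $f(x_{h+1,j})\ge b_{h+1,j}-2\varepsilon\ge b_{h+1,i^*}-2\varepsilon\ge f^*-\size(h+1)-2\varepsilon$. The count is then $k+1$ traversals per node of $I_{h+1}^\varepsilon$ and $h_{\max}$ evaluations per traversal. You correctly locate the weak points, but your patch for under-sampled nodes does not work.

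The claim ``every $k+1$ consecutive actions at depth $h+1$ on a given node end with that node's expansion'' only covers nodes that keep being selected. Take a leaf $\node{h+1}{j}\notin I_{h+1}^\varepsilon$:
\begin{itemize}
\item Before its first sample $b_{h+1,j}=\infty$, so it is sampled at least once before $\node{h+1}{i^*}$ can be expanded.
\item It keeps being selected while its width keeps $b_{h+1,j}\ge b_{h+1,i^*}$, possibly for up to $k$ samples.
\item Once $T_{h+1,j}=k$, on $\xi$ you get $b_{h+1,j}\le f(x_{h+1,j})+2\varepsilon<f^*-\size(h+1)\le b_{h+1,i^*}$. So it is never selected again, and never expanded, while $\node{h+1}{i^*}$ is a leaf.
\end{itemize}
Each such evaluation uses up a traversal in which $\node{h+1}{i^*}$ makes no progress, and none of them is charged to a node of $I_{h+1}^\varepsilon$. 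Their number is governed by the number of leaves at depth $h+1$, i.e.\ up to $K$ times the number of expanded depth-$h$ nodes, not by $|I_{h+1}^\varepsilon|$. Moreover, depth-$h$ nodes expanded after $\node{h}{i^*}$ need not lie in $I_h^\varepsilon$. So ``at most $(k+1)|I_{h+1}^\varepsilon|$ traversals after $t_h$'' does not follow. A valid bound needs an extra term, roughly $k$ times the number of leaves sampled at depths $\le h+1$.

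Your treatment of the $b_{\max}$ rule has a second, repairable flaw. You set up the step as a window of $(k+1)h_{\max}|I_{h+1}^\varepsilon|$ evaluations after $t_h$. But you charge blocked traversals inside that window to the terms $|I_{h'}^\varepsilon|$, $h'\le h$, which were already spent to reach $t_h$; that double counts. You can avoid this by counting globally all traversals during which $h^*_t\le h$, as in the SOO analysis: each such traversal either expands a node of $\bigcup_{l\le h+1}I_l^\varepsilon$ or samples a leaf at depth $h^*_t+1$. But then the sampling term is exactly the uncontrolled quantity above.

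The paper's proof does not close this gap either. It simply asserts that ``each traversal \dots\ selects one of these nodes for evaluation'', which ignores both the under-sampled leaves and the $b_{\max}$ blocking. So the obstacle you flagged is a real hole in the argument for the stated $t_h$, and your per-traversal accounting does not fill it.
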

\begin{proof}
\textit{By induction on $h$.}
For
$h=0$, the
lemma holds trivially since $h^*_t \geq 0$.
For the induction step, let us assume that the lemma holds for all
$h \in \{0, \dots,  h'\}$, where $h' < h_{\max}$
and we are to show it holds for $h'+1$ as well.
Assume we have already evaluated $t_{h'+1}$ nodes, i.e.~that we are at time
$t \geq t_{h'+1}$.
Since $t_h$ is increasing in $h$, we have also evaluated $t_{h'}$ nodes
and $h^*_t \geq h'$ from the induction step.
That means that the optimal branch is expanded at least up to the depth $h'$.
Now consider any node $\node{h'+1}{i}$ at depth $h'+1$, 
that was expanded. If it was expanded before the
optimal node $\node{h'+1}{i^*}$ at depth $h'+1$ was expanded,
then $b_{h^*_t+1,i}(t) \geq b_{h^*_t+1,i^*}(t)$.
According to Lemma~\ref{lemma:xi}, the average estimates
$\hat \mu_{h,j}(t)$ are at most $\varepsilon$ away
from their true values, with $\varepsilon$ defined in~$\eqref{def:epsilon}$.
Therefore in the event $\xi$, the true
values of the expanded and the optimal
node are at most $2\varepsilon$ apart:
\begin{align}\label{eq:2e_bound}
f(x_{h^*_t+1,i}) \geq f(x_{h^*_t+1,i^*})-2\varepsilon.
\end{align}
Since the node $\node{h^*_t+1}{i^*}$ contains the optimum $x^*$,
then by Assumptions~\hyperref[ass:A1-2]{A1-2}, we get:
$$f(\!x_{h^*_t+1,i^*}\!) + \size(\!h+1\!)\!\ge\! f(\!x_{h^*_t+1,i^*}\!) +
\ell(x_{h^*_t+1,i^*},x^*)\!\ge\! f^*\!.$$
Combining this with~\eqref{eq:2e_bound}, we obtain that:
$$f(x_{h^*_t+1,i}) \geq f(x_{h^*_t+1,i^*})-2\varepsilon \geq f^* - \big[
\size(h^*_t+1)+2\varepsilon\big].$$
This means that all the nodes  $\node{h'+1}{i}$
expanded before $\node{h'+1}{i^*}$ are 
$[\size(h^*_t+1)+2\varepsilon]$-optimal.
By Definition~\ref{def:expansion-set},
there are exactly $|I_{h'+1}^\varepsilon|$
such nodes. Each traversal of the tree
in the \StoSOO{} algorithm 
selects one of these nodes for evaluation.
Since $k$ evaluations are required before the expansion,
after $(k+1)|I_{h'+1}^\varepsilon|$ traversals,
$\node{h'+1}{i^*}$ must have been expanded.
To guarantee this many traversals,
we need $(k+1) h_{\max}|I_{h'+1}^\varepsilon|$
evaluations after $t_h'$ previous evaluations.
This is equal to $t_{h'+1}$ and thus $h^*_t \geq h' + 1$.
\end{proof}
Lemma~\ref{lem:depth.stosoo} bounds the number
of needed evaluations in the terms of the expansion set sizes to assure that
the optimal
node was expanded.
Naturally, we would like to know, how big these expansion sets can be.
The following lemma upper bounds the size of expansion sets
up to  depth where $\size(h)$ is of the order of $\varepsilon$.
For this purpose, we define $h_\varepsilon$ as:
\begin{equation}\label{def:h_epsilon}
h_\varepsilon = \argmin \{h \in \NN: \size(h+1) < \varepsilon\}.
\end{equation}

\begin{lemma}\label{lemma:expansion-set-size}
Let $d$ be a $\nu/3$-near-optimality dimension
and $C$ the related constant.
Then for each $h \leq h_\varepsilon $, the cardinality of
the expansion set at  depth $h$ is in the event $\xi$ bounded as:
\[
|I_h^\varepsilon| \le C\left(\size\left(h\right) + 2\varepsilon\right)^{-d}.
\]
\end{lemma}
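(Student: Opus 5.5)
We bound $|I_h^\varepsilon|$ by packing disjoint $\ell$-balls centred in a near-optimal set, following the SOO argument of \cite{munos2011optimistic}. Set $\varepsilon' \eqdef \size(h) + 2\varepsilon$. By Definition~\ref{def:expansion-set}, every node $\node{h}{i}\in I_h^\varepsilon$ satisfies $f(x_{h,i}) \geq f^* - \varepsilon'$, so its representative point lies in $\X_{\varepsilon'}$. Moreover, by Assumption~\hyperref[ass:A2]{A2}, each cell $\X_{h,i}$ contains the $\ell$-ball of radius $\nu\size(h)$ centred at $x_{h,i}$. Cells at the same depth are disjoint, so these balls are pairwise disjoint. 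This already gives $|I_h^\varepsilon|$ disjoint balls with centres in $\X_{\varepsilon'}$.

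Next, compare the radius $\nu\size(h)$ with the radius demanded by Definition~\ref{def:near-opt} for the $\nu/3$-near-optimality dimension at level $\varepsilon'$, namely $(\nu/3)\varepsilon'$. Here we use $h\le h_\varepsilon$. By the definition of $h_\varepsilon$ in \eqref{def:h_epsilon}, for $h\le h_\varepsilon$ we have $\size(h)\ge\varepsilon$. Indeed, $\size$ is decreasing, and $h_\varepsilon$ is the first index with $\size(h_\varepsilon+1)<\varepsilon$, so $\size(h)\geq \size(h_\varepsilon) \ge \varepsilon$. This holds provided $h_\varepsilon\ge1$ or $\size(0)\ge\varepsilon$; the degenerate case can be handled separately or assumed. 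Hence $\varepsilon' = \size(h)+2\varepsilon \le 3\size(h)$, and therefore $(\nu/3)\varepsilon' \le \nu \size(h)$. Each ball of radius $\nu\size(h)$ thus contains the concentric ball of radius $(\nu/3)\varepsilon'$. These smaller balls are still pairwise disjoint and still centred in $\X_{\varepsilon'}$.

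Applying Definition~\ref{def:near-opt} with $\nu/3$ in place of $\nu$ and $\varepsilon'$ in place of $\varepsilon$ then gives $|I_h^\varepsilon| \le C(\varepsilon')^{-d} = C(\size(h)+2\varepsilon)^{-d}$. This argument is deterministic given the definition of $I_h^\varepsilon$. The event $\xi$ enters only through the interpretation of $I_h^\varepsilon$ as the set of nodes expanded before the optimal one, as in Lemma~\ref{lem:depth.stosoo}.

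The main obstacle is the radius comparison. It is exactly where $h\le h_\varepsilon$ and the factor $1/3$ are used, and it requires being careful with the off-by-one in the definition of $h_\varepsilon$, so that $\size(h)\ge\varepsilon$ really holds for all $h\le h_\varepsilon$. If it fails at the endpoint, I would restrict to $h<h_\varepsilon$ or adjust the constant.
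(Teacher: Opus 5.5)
Your proposal is correct and follows essentially the same packing argument as the paper, which merely phrases it as a contradiction: the representative points lie in $\X_{\size(h)+2\varepsilon}$, the A2-balls of radius $\nu\size(h)\ge\frac{\nu}{3}(\size(h)+2\varepsilon)$ sit in disjoint cells, and the $\nu/3$-near-optimality dimension caps how many such balls there can be. Your off-by-one caveat is more careful than the paper, and the only failing case ($h=h_\varepsilon=0$ with $\size(0)<\varepsilon$) is trivial: depth $0$ contains only the root, and $C(\varepsilon')^{-d}>1$ because a single ball centred at $x^*\in\X_{\varepsilon'}$ is already a valid packing, so no restriction to $h<h_\varepsilon$ or change of constant is needed.
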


\begin{proof}
\textit{By contradiction.} Assume that for some $h \leq h_\varepsilon$,
$|I_h^\varepsilon| > C\left(\size\left(h\right) + 2\varepsilon\right)^{-d}$.
By definition of $|I_h^\varepsilon|$,  each representative point
$x_{h,i}$ of the node $\node{h}{i}$ is [$\size(h) + 2\varepsilon$]-optimal. By
Assumption~\hyperref[ass:A2]{A2}, each cell associated with the node
$\node{h}{i}$ at
depth $h$ contains  a ball of radius
$\nu\size(h) = \frac{\nu}{3}\cdot3\size(h) \ge \frac{\nu}{3} (\size(h) +
2\varepsilon)$ with the representative point $x_{h,i}$, because for $h \leq h_\varepsilon$,
we have that $ \varepsilon \leq \size(h)$ by~\eqref{def:h_epsilon}.
Since the cells are disjoint,
we have a contradiction with $\nu/3$-near-optimality dimension being~$d$.
\end{proof}
We now link the depth of the tree after $n$ iterations with the
loss as defined in~\eqref{def:regret}.
\begin{theorem}\label{thm:stosoo-regret}
Assume that Assumptions~\hyperref[ass:A1-2]{A1-2} hold. Let $d$ be the
$\nu/3$-near-optimality dimension and $C$ be the corresponding constant. 
Then the loss of \StoSOO{} run with parameters $k$, $h_{\max}$, and $\delta>0$, after $n$ iterations
is bounded, with probability $1-\delta$, as:
$$R_n \le 2\varepsilon + \size\left(\min\left( h(n) -1, h_\varepsilon, h_{\max}
\right)\right)$$
where $\varepsilon=\sqrt{\log(nk/\delta)/(2k)}$ and $h(n)$ is the smallest $h \in
\NN$, such that:
$$C (k +1) h_{\max}
 \sum_{l=0}^{h} \left(\size\left(l\right)+2\varepsilon\right)^{-d} \ge n.
$$

\end{theorem}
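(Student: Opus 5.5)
We work throughout on the event $\xi$, which by Lemma~\ref{lemma:xi} has probability at least $1-\delta$. Set
$$h' \eqdef \min\left(h(n)-1,\, h_\varepsilon,\, h_{\max}\right).$$
The argument has three steps.

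\textbf{Step 1: the optimal branch is expanded to depth at least $h'$.}
Since $h'\le h_\varepsilon$, Lemma~\ref{lemma:expansion-set-size} applies at every depth $l\le h'$. It gives
$$t_{h'} = (k+1)h_{\max}\sum_{l=0}^{h'}|I_l^\varepsilon| \le C(k+1)h_{\max}\sum_{l=0}^{h'}\left(\size(l)+2\varepsilon\right)^{-d}.$$
Because $h'\le h(n)-1<h(n)$ and $h(n)$ is the smallest index at which this sum reaches $n$, the right-hand side is strictly below $n$. Hence $t_{h'}<n$, and the algorithm performs at least $t_{h'}$ evaluations. Since also $h'\le h_{\max}$, Lemma~\ref{lem:depth.stosoo} yields $h^*_n\ge h'$. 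The degenerate case $h(n)=0$ needs separate handling: there $h'=-1$, and I would use the convention that $\size(-1)$ is at least the range of $f$, so the bound is trivial.

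\textbf{Step 2: the returned depth is at least $h'$.}
The output $x(n)$ is chosen among nodes at depth $H\eqdef\mathrm{depth}(\T\setminus\calL)$, the deepest expanded depth. By Step 1, $H\ge h^*_n\ge h'$.

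\textbf{Step 3: bounding the loss of the output.}
Let $\node{H}{j}$ be the returned node. Let $\node{H}{i^*}$ be the node at depth $H$ on the optimal path, if that node has been expanded, or otherwise the optimal leaf at depth $h^*_n+1\le H$. I would compare $\hat\mu_{H,j}$ with the estimate of this optimal-path node. If both are expanded they have $k$ samples each, so on $\xi$ their estimates are $\varepsilon$-accurate, and
$$f(x_{H,j}) \ge \hat\mu_{H,j}-\varepsilon \ge \hat\mu_{H,i^*}-\varepsilon \ge f(x_{H,i^*})-2\varepsilon \ge f^*-\size(H)-2\varepsilon,$$
using A1--A2 for the last inequality. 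Since $\size$ is decreasing and $H\ge h'$, this gives $R_n\le 2\varepsilon+\size(h')$.

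\textbf{Main obstacle.}
The real difficulty is Step 3. The optimal-path node at depth $H$ need not be expanded; indeed, if $h^*_n<H$ it is not. Its estimate may then rest on fewer than $k$ samples, or it may not lie among the expanded nodes at depth $H$ that the output rule compares.

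My first attempt would be to work at depth $h^*_n$ rather than $H$, or to argue as in the proof of Lemma~\ref{lem:depth.stosoo}. Every expanded node at depth $H$ that was expanded before the optimal node at that depth was selected with a $b$-value at least that of the optimal node, so it lies in $I_H^\varepsilon$. Such a node is therefore $(\size(H)+2\varepsilon)$-optimal. Then I would use the $k$-sample accuracy on $\xi$ to transfer near-optimality to the empirical argmax among expanded nodes at depth $H$. This may cost an extra $\varepsilon$ that has to be absorbed into the constant.
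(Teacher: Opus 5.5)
Your Steps 1 and 2 are correct and match the first half of the paper's proof. The paper splits into cases ($h(n)-1\le h_\varepsilon$ versus $h(n)-1\ge h_\varepsilon+1$, then $h(n)-1$ versus $h_{\max}$). Your $h'=\min(h(n)-1,h_\varepsilon,h_{\max})$ is at once $\le h_\varepsilon$, $\le h_{\max}$ and $<h(n)$, so you get $t_{h'}<n$ and $h^*_n\ge h'$ in one step.

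The genuine gap is Step 3 in the case $H>h^*_n$, and the repair you sketch does not work. The argument of Lemma~\ref{lem:depth.stosoo} puts a depth-$H$ node in $I_H^\varepsilon$ only if the optimal node at depth $H$ was a competing leaf when that node was selected. If $H\ge h^*_n+2$, the optimal node at depth $H$ is not even in the tree. If $H=h^*_n+1$, the expanded depth-$H$ nodes may have been expanded before it was created. The $b_{\max}$ rule does not help either. It only compares against nodes \emph{expanded} earlier in the same traversal, and sampling the optimal leaf does not raise $b_{\max}$. So nothing you have shows that the expanded nodes at depth $H$ are near-optimal. Separately, an extra $\varepsilon$ cannot be absorbed into a constant, since the statement has exactly $2\varepsilon$. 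It would not arise anyway: if every candidate were in $I_H^\varepsilon$, the argmax would be too.

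The paper anchors the comparison at depth $h^*_n$ rather than at depth $H$. The node $\node{h^*_n}{i^*}$ is expanded, so it has exactly $k$ samples and is $\varepsilon$-accurate on $\xi$. A1--A2 give $f(x_{h^*_n,i^*})\ge f^*-\size(h^*_n)$. The paper then asserts $f(x_{h,j})\ge f(x_{h^*_n,i^*})-2\varepsilon$ for the deepest expanded node $\node{h}{j}$ and finishes by monotonicity of $\size$.

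That inequality is immediate if the returned node $\node{h}{j}$ maximizes $\hat\mu$ over \emph{all} expanded nodes, each of which has $k$ samples. Then $f(x_{h,j})\ge\hat\mu_{h,j}(n)-\varepsilon\ge\hat\mu_{h^*_n,i^*}(n)-\varepsilon\ge f^*-\size(h^*_n)-2\varepsilon$. With the output restricted to depth $H$, the paper gives no reason why the returned node should beat $\node{h^*_n}{i^*}$ in empirical mean. The obstacle you flagged is exactly where the paper's proof is thin.

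To complete your argument, you have two options. You can read the output rule as ranging over all expanded nodes and use the chain above. Or you can prove $H=h^*_n$. The lemmas do not give this, and the algorithm does not obviously force it: a suboptimal branch around a sharp local maximum can be expanded deeper than an optimal branch that was delayed at shallow depths.
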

\begin{proof}
Let us first consider the case when $h(n)-1 \leq h_\varepsilon$.
Then we can use Lemma~\ref{lemma:expansion-set-size} to show that:
\begin{align}\label{eq:number-evaluated-bound}
n &> C (k + 1) h_{\max}
\sum_{l=0}^{h(n)-1}
\left(\size\left(l\right)+2\varepsilon\right)^{-d} 
\nonumber \\
&\ge  (k + 1) h_{\max}
\sum_{l=0}^{h(n)-1} |I_l^\varepsilon| =  t_{h(n)-1}
\end{align}
If $h(n) - 1 \le h_{\max}$ then by Lemma~\ref{lem:depth.stosoo},
 $h_n^*\ge h(n)-1$. If, however, $h(n) - 1 > h_{\max}$,
then by~\eqref{eq:number-evaluated-bound} the algorithm has expanded
all potentially optimal nodes on the level $h_{\max}$
 and therefore $h_n^* \ge h_{\max}$. Nonetheless
the algorithm does not go beyond $h_{\max}$, so necessarily
$h_n^* =  h_{\max}$. Hence, in the case when $h(n)-1 \le h_\varepsilon$,
$h_n^* \ge \min\{h(n) - 1,  h_{\max}\}$.
Now consider the opposite case, i.e.,\ when $h(n)- 1 \ge h_\varepsilon + 1$.
We can now use Lemma~\ref{lemma:expansion-set-size}, but only up to  depth
$h_\varepsilon$, to get that $n > t_{h_\varepsilon}$.
Similarly to the previous case, we deduce that $h_n^* \ge \min\{h_\varepsilon,
h_{\max}\}$. Altogether, $h_n^* \ge \min\{h(n) - 1 ,h_\varepsilon,
h_{\max}\}$. Let $\node{h}{j}$ be the deepest node that has been
expanded after $n$ evaluations. We know that $h \ge h_n^*$.
Let also $\node{h_n^*}{i^*}$ be the optimal node at the
depth $h_n^*$.
As $\node{h}{j}$ was expanded, the true value of its representative point
and the representative point of $\node{h_n^*}{i^*}$
is in the event $\xi$ at most $2\varepsilon$ away
and therefore we conclude that:
\begin{align}
f(x_{h,j}) &\geq f(x_{h^*_n,i^*})-2\varepsilon \geq f^* -
[\size(h^*_n)+2\varepsilon] \nonumber\\\nonumber
&\ge f^* - [ \size(\min\{h(n) - 1 ,h_\varepsilon,
h_{\max}\})+2\varepsilon].
\end{align}
\end{proof}

\section{The important case $d=0$}\label{sec:case.d=0}
We now deduce the following corollaries for the case when the near-optimality dimension $d=0$ and the diameters $\size(h)$ are exponentially decreasing. We postpone the discussion about this important case $d=0$ to Section~\ref{sec:case.d=0.intuition}.

\begin{corollary}\label{thm:stosoo}
Assume that the diameters of the cells decrease exponentially fast,
i.e.,\ $\size(h)=c \gamma^h$ for some $c>0$ and $\gamma<1$. Assume that the
$\nu/3$-near-optimality dimension is $d=0$ and let $C$ be the corresponding
constant. Then the expected loss of \StoSOO{} run with parameters $k$,
$h_{\max}=\sqrt{n/k}$, and $\delta>0$, is bounded as:
\begin{equation}
\label{eq:stosoo.bound}
\E[R_n] \leq (2 + 1/\gamma)\varepsilon + c \gamma^{\sqrt{n/k} \min\{0.5/C,1\} - 2} + 2\delta.
\end{equation}
\end{corollary}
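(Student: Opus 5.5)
The plan is to specialize Theorem~\ref{thm:stosoo-regret} to $d=0$ and $\size(h)=c\gamma^h$, and then pass from the high-probability bound to an expectation.

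\textbf{From probability to expectation.} On the complement of $\xi$, which has probability at most $\delta$ by Lemma~\ref{lemma:xi}, we use the trivial bound $R_n\le 2$, since rewards are bounded by $1$ and hence so is $f$. This contributes the $2\delta$ term. On $\xi$ we apply the theorem, which gives
\[
R_n\le 2\varepsilon + c\gamma^{m},\qquad m=\min\{h(n)-1,h_\varepsilon,h_{\max}\}.
\]
We then bound $c\gamma^{m}$ by splitting according to which of the three terms achieves the minimum.

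\textbf{Case $m=h_\varepsilon$.} By definition~\eqref{def:h_epsilon}, $\size(h_\varepsilon+1)<\varepsilon$, so $c\gamma^{h_\varepsilon}=\size(h_\varepsilon+1)/\gamma<\varepsilon/\gamma$. This is the source of the $\varepsilon/\gamma$ term.

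\textbf{Cases $m=h(n)-1$ or $m=h_{\max}$.} With $d=0$, every summand in the definition of $h(n)$ equals $1$. Hence $h(n)$ is the smallest integer $h$ with $C(k+1)h_{\max}(h+1)\ge n$, so
\[
h(n)\ge \frac{n}{C(k+1)h_{\max}}-1.
\]
Since $k+1\le 2k$ (assuming $k\ge1$) and $h_{\max}=\sqrt{n/k}$,
\[
\frac{n}{C(k+1)h_{\max}}\ge \frac{n}{2Ck\sqrt{n/k}}=\frac{0.5}{C}\sqrt{n/k}.
\]
Therefore $h(n)-1\ge (0.5/C)\sqrt{n/k}-2$. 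Likewise $h_{\max}=\sqrt{n/k}\ge \sqrt{n/k}-2$. Combining the two, $\min\{h(n)-1,h_{\max}\}\ge \sqrt{n/k}\,\min\{0.5/C,1\}-2$. Because $\gamma<1$, $c\gamma^{m}\le c\gamma^{\sqrt{n/k}\min\{0.5/C,1\}-2}$ in these cases.

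\textbf{Combining.} In every case,
\[
c\gamma^{m}\le \frac{\varepsilon}{\gamma}+c\gamma^{\sqrt{n/k}\min\{0.5/C,1\}-2},
\]
using $\gamma^{\min(a,b,e)}\le\gamma^a+\gamma^b+\gamma^e$. Taking expectations over $\xi$ and its complement then gives~\eqref{eq:stosoo.bound}.

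\textbf{Main obstacle.} The only delicate point is bookkeeping: integrality of $h_{\max}$ (a floor costs at most $1$, which the $-2$ slack absorbs), and the constant conversion $k+1\le 2k$. I expect these to go through with the slack provided in the exponent.
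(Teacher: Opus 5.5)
Your proposal is correct and follows the paper's proof essentially step for step. The $d=0$ specialization gives $h(n)\ge n/(C(k+1)h_{\max})-1$, and $h_{\max}=\sqrt{n/k}$ with $k+1\le 2k$ then yields the exponent. The bound $\size(h_\varepsilon)<\varepsilon/\gamma$ handles the $h_\varepsilon$ branch through the same splitting $\size(\min\{h(n)-1,h_\varepsilon,h_{\max}\})\le \size(h_\varepsilon)+\size(\min\{h(n)-1,h_{\max}\})$. The complement of $\xi$ contributes $2\delta$ via $R_n\le 2$. Your remark on the integrality of $h_{\max}$ is a harmless extra that the paper leaves implicit, and it is indeed absorbed by the $-2$ slack.
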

\begin{proof}
When $d=0$, then $\big[\size(l)+2\varepsilon\big]^{-d} = 1$ and
by definition of $h(n)$,  we have that
$n \leq C (k + 1) h_{\max}
\sum_{l=0}^{h(n)} \big[\size(l)+2\varepsilon\big]^{-d}
= C (k + 1) h_{\max}
(h(n)+1)$, which implies $h(n) \ge n/(C(k+1)h_{\max}) -1$.
Intuitively, the deeper is the node we return, the lower regret we can
incur. This suggests the choice of $h_{\max} = \sqrt{n/k}$,
in which case we get $h(n) \ge \sqrt{n}/(2C\sqrt{k}) -1$, since $k\ge1$.
Moreover, since $\size(h)=c \gamma^h$, then by definition of $h_\varepsilon$
we have  that: 
$$\size(h_\varepsilon) = c\gamma^{h_\varepsilon + 1}/\gamma  = \size(h_\varepsilon + 1)/\gamma < \varepsilon/\gamma.$$
By Theorem~\ref{thm:stosoo-regret},
we have that in the event $\xi$, the regret of \StoSOO{} is at most:
\begin{align*}
R_n&\leq
2\varepsilon + \size(\min\{h(n)-1,h_\varepsilon,h_{\max}\}) \\
&\leq 2\varepsilon + \size(h_\varepsilon) + \size(\min\{h(n)-1,h_{\max}\}) \\
&\leq (2 + 1/\gamma)\varepsilon + c \gamma^{\sqrt{n/k} \min\{0.5/C,1\} - 2}
\end{align*}

%
%
We obtain the upper bound on the expected loss~\eqref{eq:stosoo.bound}, by
considering that by Lemma~\ref{lemma:xi}, $\xi$ holds with probability
$1-\delta$
and $|r_t| \leq 1$.
\end{proof}
\begin{corollary}\label{thm:col1}
For the choice $k=n/\log^3(n)$  and $\delta =
1/\sqrt{n}$, we have:
$$\E[R_n]=O\Big(\frac{\log^2(n)}{\sqrt{n}}\Big).$$
\end{corollary}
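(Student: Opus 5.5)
Corollary~\ref{thm:col1} follows by substituting $k=n/\log^3(n)$ and $\delta=1/\sqrt{n}$ into the bound~\eqref{eq:stosoo.bound} of Corollary~\ref{thm:stosoo}. We then check that each of the three terms is $O(\log^2(n)/\sqrt{n})$. The constants $c$, $\gamma$ and $C$ are fixed, independent of $n$, and we take $n$ large enough that $k\ge 1$.

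\textbf{First term.} Since $nk/\delta = n^{5/2}/\log^3(n) \le n^{5/2}$, we have $\log(nk/\delta)\le \tfrac52\log n$. Hence
$$\varepsilon=\sqrt{\frac{\log(nk/\delta)}{2k}} \le \sqrt{\frac{\tfrac52 \log(n)\,\log^3(n)}{2n}} = O\Big(\frac{\log^2(n)}{\sqrt n}\Big),$$
and $(2+1/\gamma)\varepsilon$ has this order, since $\gamma$ is a constant.

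\textbf{Third term.} $2\delta = 2/\sqrt{n}$, which is of lower order.

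\textbf{Second term.} Here $\sqrt{n/k}=\log^{3/2}(n)$. Writing $a=\min\{0.5/C,1\}>0$, the term equals
$$c\gamma^{-2}\exp\big(-a\log(1/\gamma)\log^{3/2}(n)\big).$$
Because $\log^{3/2}(n)/\log(n)\to\infty$, for all sufficiently large $n$ the exponent is at most $-\log n$, so the term is $O(1/n)$. This is again dominated by $\log^2(n)/\sqrt n$.

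This last comparison is the only nonroutine step. The tuning $k=n/\log^3(n)$ is exactly what makes the depth $\sqrt{n/k}$ grow super-logarithmically, so that the geometric decay $\gamma^{h}$ beats any polynomial rate. The price is the extra $\log^2$ factor in $\varepsilon$.

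Two minor points remain. First, $h_{\max}=\sqrt{n/k}$ should be rounded to an integer; this changes the exponent by at most an additive constant, which is absorbed into $c\gamma^{-2}$. Second, the bound is asymptotic, so finitely many small $n$ are absorbed into the $O(\cdot)$ constant. Summing the three terms gives $\E[R_n]=O(\log^2(n)/\sqrt n)$.
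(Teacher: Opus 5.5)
Your proof is correct and follows the paper's route. You substitute $k=n/\log^3(n)$ and $\delta=1/\sqrt{n}$ into~\eqref{eq:stosoo.bound}, use $nk/\delta\le n^{5/2}$ to get $\varepsilon\le\sqrt{5/4}\,\log^2(n)/\sqrt{n}$, and argue that the geometric term is negligible for large $n$. You make that last step more explicit than the paper does, showing the term is $O(1/n)$ once $\min\{0.5/C,1\}\log(1/\gamma)\sqrt{\log n}\ge 1$, and you also handle the $2\delta$ term, which the paper leaves implicit.
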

This result shows that, surprisingly, \StoSOO{} achieves the same rate $\tilde O(n^{-1/2})$, up to a logarithmic factor, as the HOO algorithm run with the best possible metric, although \StoSOO{} does not require the knowledge of~it.
\begin{proof}
Setting $k=n/\log^3(n)$ and $\delta = 1/\sqrt{n}$
we can upper bound $\varepsilon$ in~\eqref{eq:stosoo.bound} which was defined
in~\eqref{def:epsilon} as:
$$\varepsilon\!=\! \sqrt\frac{\log(nk/\delta)}{2 k}
\!=\!\sqrt\frac{\log(nk\sqrt{n})\log^3(n)}{2 n}
\!\leq\! \sqrt{\frac 54} \frac{\log^2(n)}{\sqrt{n}}. $$
Now for $n$ bigger than a quantity exponential in $C/\log(1/\gamma)$,
the second term in~\eqref{eq:stosoo.bound} becomes negligible 
and the upper bound for this choice follows.
\end{proof}

\subsection{Some intuition about the case $d=0$}\label{sec:case.d=0.intuition}

We have seen that the near-optimality dimension $d$ is a property of both the
function and the semi-metric $\ell$. Since \StoSOO{} does not require the knowledge
of the semi-metric $\ell$ (it is only used in the analysis),
one can choose the best possible semi-metric $\ell$, {\bf possibly according to
the function $f$ itself}, in order to have the lowest possible value of $d$. The
case $d=0$ thus corresponds to the following assumption on $f$: there exists a
semi-metric $\ell$ such that:
\textbf{1)} $f$ is locally smooth w.r.t.~$\ell$ around a global optimum $x^*$
(i.e.\ such that \eqref{ass:f} holds)
 \textbf{2)} the diameters of the cells (measured with $\ell$) decrease exponentially fast, and
 \textbf{3)} there exists $C>0$ such that for any $\varepsilon>0$, the maximal number of disjoint $\ell$-balls of radius $\nu\varepsilon/3$ centered in
$\X_\varepsilon$ is less than $C$ (i.e.\ the near-optimality dimension $d$ is $0$).

\subsection{Examples}\label{sec:case.d=0.examples}
Let us consider the case of functions $f$ defined on $[0,1]^D$ that are locally equivalent to a
polynomial of degree $\alpha$ around their maximum,
i.e.,\ $f(x)-f(x^*)=\Theta(\|x-x^* \|^\alpha)$
for some $\alpha>0$, where $\|\cdot\|$ is any norm. 
The choice of semi-metric $\ell(x,y) =  \|x-y \|^\alpha$ implies that the
near-optimality dimension $d=0$. This covers already a large class of functions
(such as the functions plotted in Figure~\ref{fig:d0functions}: the \emph{two-sine
product} function for which $\alpha=2$ and the non-Lipschitz \emph{garland} function
for which $\alpha=1/2$).
 \begin{figure}
 \begin{center}
\includegraphics[width=0.45\columnwidth]{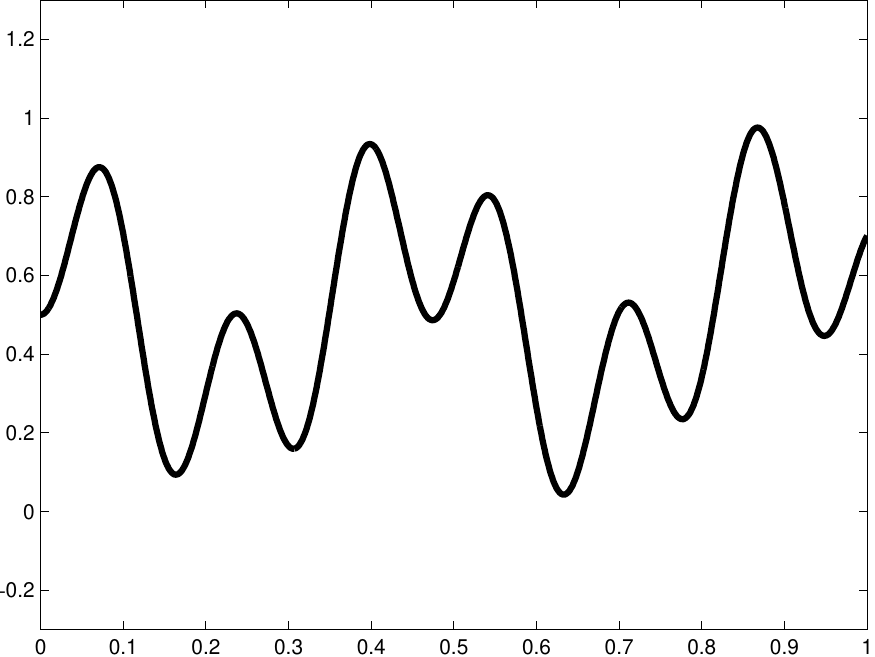}
\quad
 \includegraphics[width=0.45\columnwidth]{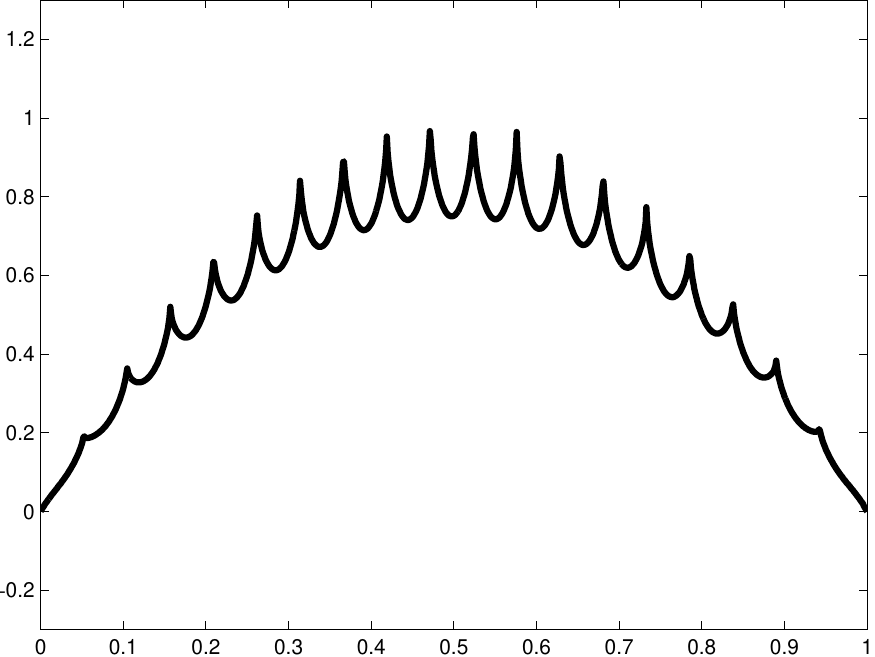}
  \caption{Functions with $d=0$.
\textbf{Left:} \emph{Two-sine product} function
$f_1(x) = \tfrac12 (\sin(13x) \cdot \sin(27x)) + 0.5$.
\textbf{Right:} \emph{Garland} function:
 $f_2(x) = 4 x (1-x) \cdot (\tfrac34+\tfrac14(1-\sqrt{|\sin(60x)|}))$.}
\label{fig:d0functions}
\end{center}
 \end{figure}

More generally, we consider a finite dimensional and bounded space,
i.e., such that ${\cal X}$ can be packed by $C_{\cal X}\varepsilon^{-D}$
$\ell$-balls with radius $\varepsilon$ (e.g., Euclidean space $[0,1]^D$)
and such that ${\cal X}$ has a finite doubling constant
(defined as minimum value $q$ such that every ball in ${\cal X}$ can be packed by at most $q$ balls
in ${\cal X}$ of half the radius).
Let a function in such space have upper- and lower envelope
around $x^*$ of the same order (Figure~\ref{fig:near-opt-example1}), i.e.,\
there exists constants $c \in (0,1)$, and
$\eta>0$, such that for all $x\in\X$:
\begin{equation}\label{eq:enveloppes}
\min(\eta, c \ell(x,x^*))\leq f(x^*)-f(x) \leq \ell(x,x^*).
\end{equation}
We show that all such functions have a near-optimality dimension $d=0$
according to Definition~\ref{def:near-opt}, (where $\nu = 1$ for simplicity),
which means that for all $\varepsilon > 0$,
the packing number of ${\cal X}_\varepsilon$ is upper-bounded by a constant.

\begin{figure}[ht]
\begin{center}
\includegraphics[width=0.9\columnwidth]{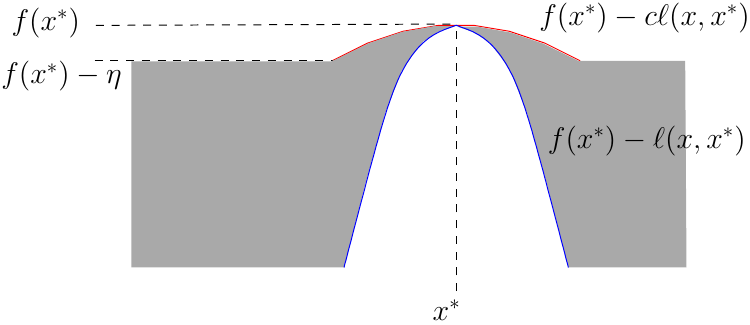}
\caption{Any function satisfying \eqref{eq:enveloppes} lies in the gray
area and possesses a lower- and upper-envelopes that are of same order around $x^*$. }
\label{fig:near-opt-example1}
\end{center}
\end{figure}

In the case when $\varepsilon < \eta$, due the upper envelope
we have that:
${\cal X}_\varepsilon \subset \{ x: c \ell(x,x^*) \leq \varepsilon \},$
which corresponds to an $\ell$-ball centered in $x^*$ with radius $\varepsilon/c$.
This ball can be packed by no more than a  constant
number of $C'$ $\ell$-balls of radius $\varepsilon$.
$C'$ is necessarily finite because the doubling constant $q$
is finite.  For example in $[0,1]^D,$ if $\ell(x,y) =  \|x-y \|_\infty$ then
$C' = (1/c)^D$.

In the opposite case when $\varepsilon \ge \eta$,
the radius of disjoint $\ell$-balls 
that could possibly pack ${\cal X}_\varepsilon$
is at least $\eta$.
Noting that ${\cal X}_\varepsilon \subset {\cal X}$,
we can upper bound the packing number of the whole space ${\cal X}$,
by a constant $C_{\cal X}(\eta)^{-D}$ that is independent of $\varepsilon$.
Finally, defining $C = \max \{C',C_{\cal X}(\eta)^{-D}\}$
we have that for all $\varepsilon$, the maximum number of disjoint $\ell$-balls of radius $\varepsilon$
and center in  ${\cal X}_\varepsilon$ is less than a $C$ and therefore $d=0$.

Even more generally, one can even define the semi-metric $\ell$ according to the behavior of $f$ around $x^*$ in order that  \eqref{ass:f} holds. For example if the space $\X$ is a normed space (with norm $\|\cdot\|$), one can define the metric $\ell(x,y)\eqdef\tilde\ell(\|x-y\|)$ for any $r\ge0$ as:
 $$\tilde\ell(r)=\sup_{x;\|x^*-x\|\leq r} \left[f(x^*) - f(x)\right].$$
 Thus $f(x^*)-\ell(x,x^*)$ naturally forms a lower-envelope of $f$. Thus assuming that the first inequality of $\eqref{eq:enveloppes}$ (upper-envelope) holds, then  $d=0$ again. 

However, although the case $d=0$ is quite general, it does not hold in situations where there is a discrepancy between the upper- and lower-envelopes (Figure~\ref{fig:near-opt-example2}).
\begin{figure}[H]
\begin{center}
\includegraphics[height=3cm, width=\columnwidth]{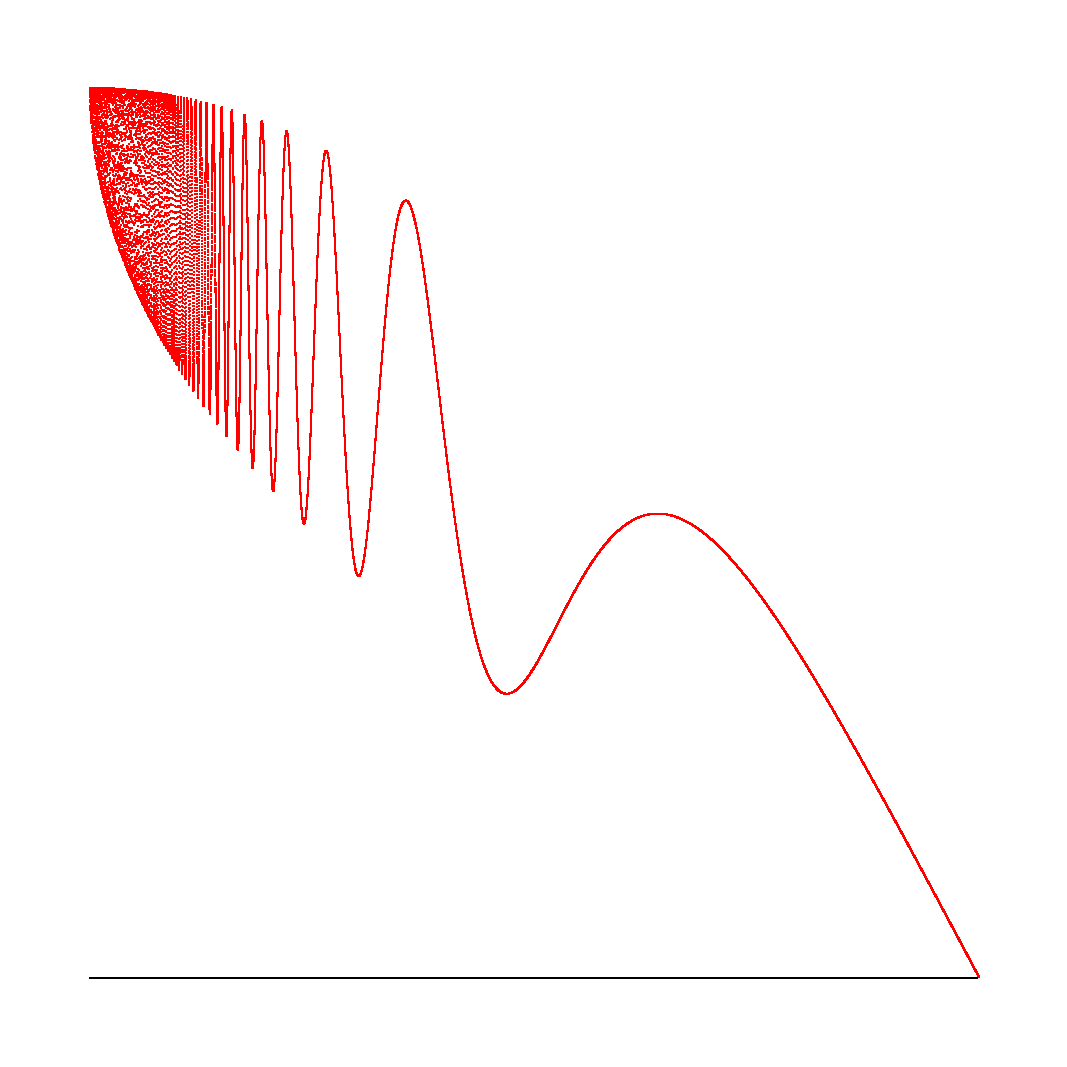}
\caption{We illustrate the case of a function with different order in the upper
and lower envelopes, when $\ell(x,y) =  |x-y|^\alpha$.
 Here $f(x)= 1-\sqrt{x} + (-x^2 +\sqrt{x}
)\cdot(\sin(1/x^2)+1)/2$. The lower-envelope behaves like a square root
 whereas the upper one is quadratic.
The maximum number of $\ell$-balls with radius $\varepsilon$
that can pack ${\cal X}_\varepsilon$
(i.e.,~Euclidean balls with radius $\varepsilon^{1/\alpha}$)
is at most of order $\varepsilon^{1/2}/\varepsilon^{1/\alpha} \leq \varepsilon^{-3/2}$,
since $\alpha\leq 1/2$ in order to satisfy~\eqref{ass:f}. We deduce that there is no semi-metric of the form
$|x-y|^\alpha$ for which
$d<3/2$.\label{fig:near-opt-example2}}
\end{center}
\end{figure}

\section{Experiments}
\label{sec:experiments}
In this section we numerically evaluate the performance of \StoSOO{}\footnote{{\tiny code available at \url{https://sequel.lille.inria.fr/Software/StoSOO}}}.
In all experiments with set the parameters $k$, $\delta$, and 
$h_{\max}$ to the values from Corollary~\ref{thm:col1}.
Moreover, we set the branching factor to $K=3$. 
Note that when the branching factor is an odd number ($K\ge 3$),
we can reuse the evaluations (samples) from the parent node.
Indeed, if $K$ is odd, the representative point of the parent node $\node{h}{i}$
will have the same value as the middle child $\node{h+1}{(K+1)/2}$, 
i.e.,\ $x_{h,i} = x_{h+1,i_{(K+1)/2}}$.
In the case when the domain of $f$ is multi-dimensional,
we only need to split along one dimension at the time,
when expanding the node. In order to preserve bounded diameters
assumption, we can split each time along the dimension in 
which the cell is the largest.

For the evaluation we added a truncated (so that rewards are bounded) 
zero mean Gaussian noise ${\cal N}_T$,
sample of which is shown in Figure~\ref{fig:noise}.
In all the experiments we performed 10 trials and the error bars in the
figures correspond to standard deviations.

\begin{figure}[H]
 \begin{center}
\vspace*{0.3em}
 \includegraphics[width=0.45\columnwidth]{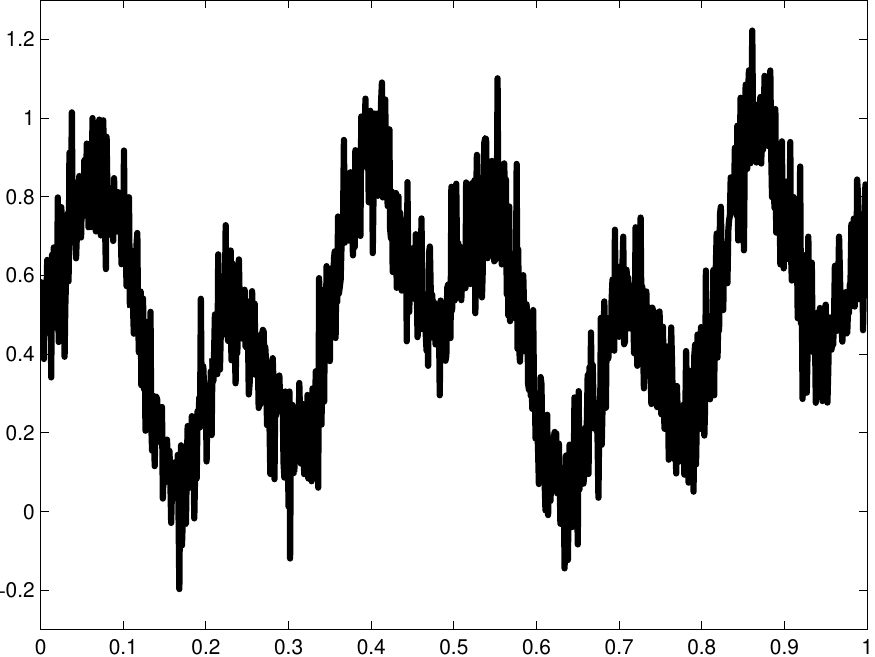}
\quad
\includegraphics[width=0.45\columnwidth]{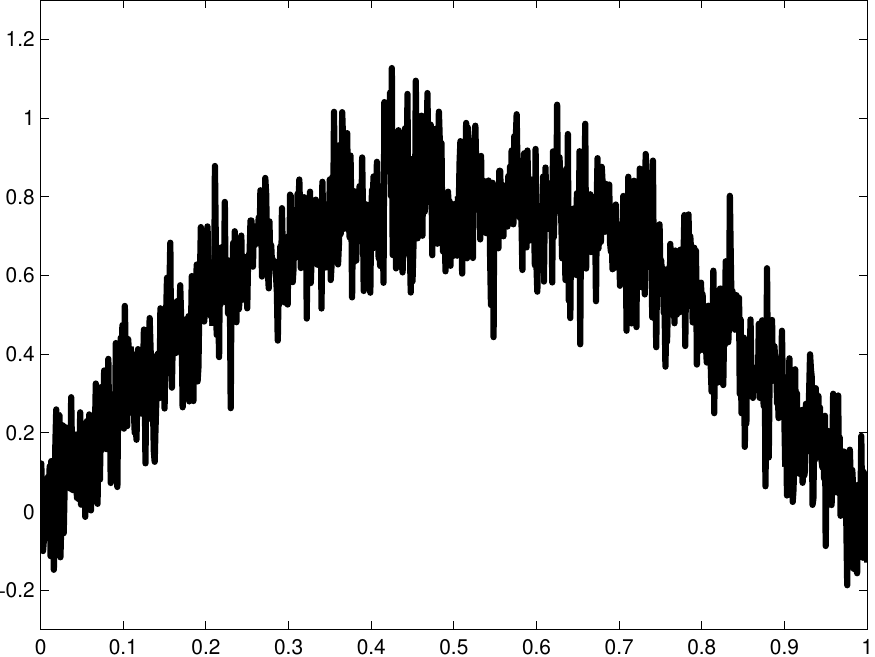}
 \caption{Functions from Figure~\ref{fig:d0functions} noised with ${\cal
N}_T(0,0.1)$.}
\label{fig:noise}
\end{center}
 \end{figure}

\vspace*{-0.3em}
\textbf{Two-sine product:}
In the first set of experiments we consider a two-sine product
function displayed in Figure~\ref{fig:d0functions} (left)
maximized for $f(0.867526) \approx 0.975599$.
Figure~\ref{fig:f1exp} displays the performance of \StoSOO{} for 
different levels of noise. We observe that as we increase 
the number of evaluations, the regret of \StoSOO{} decreases.

\vspace*{0.2em}
 \begin{figure}[H]
 \begin{center}
 \includegraphics[width=0.32\columnwidth]{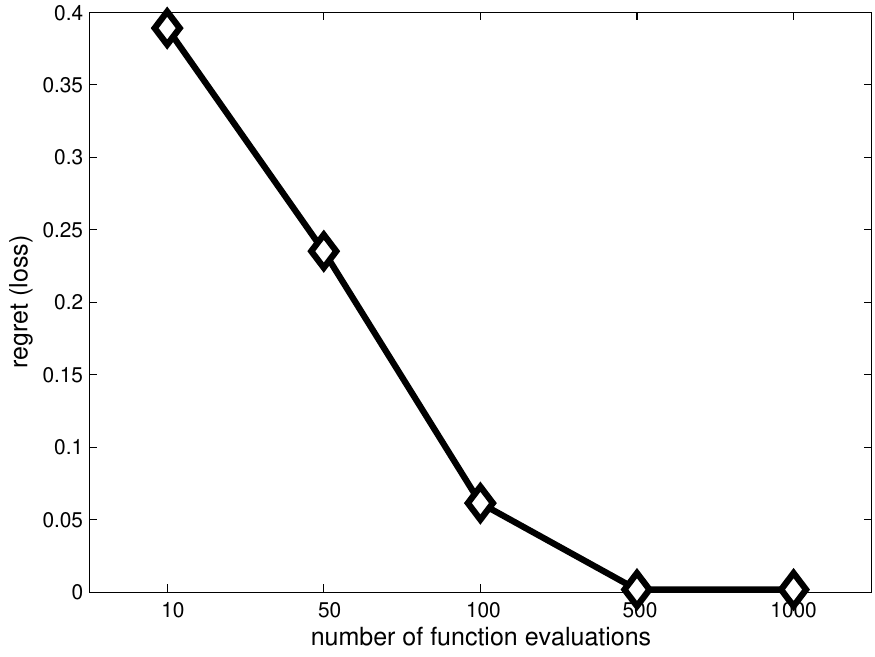} 
\includegraphics[width=0.32\columnwidth]{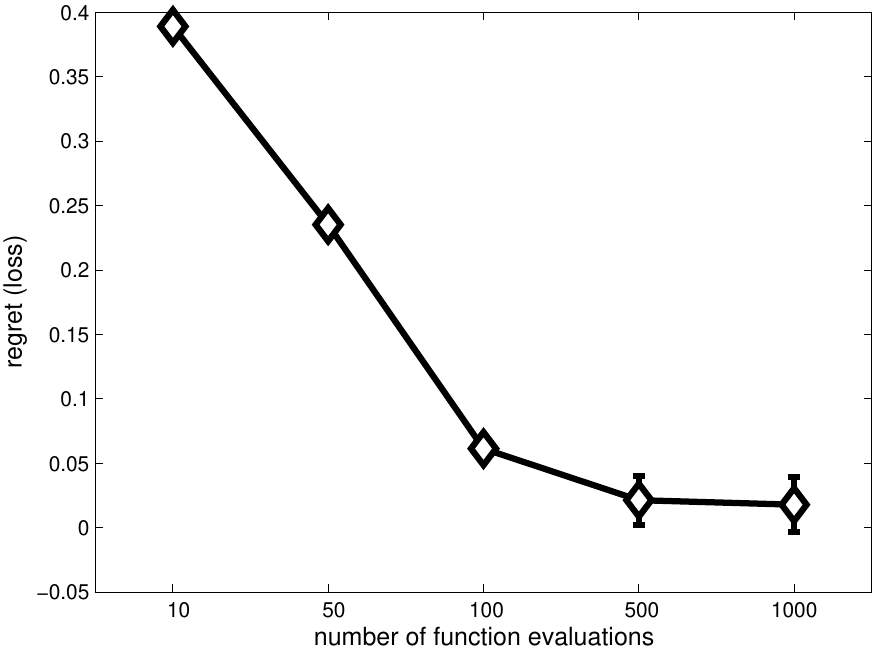} 
\includegraphics[width=0.32\columnwidth]{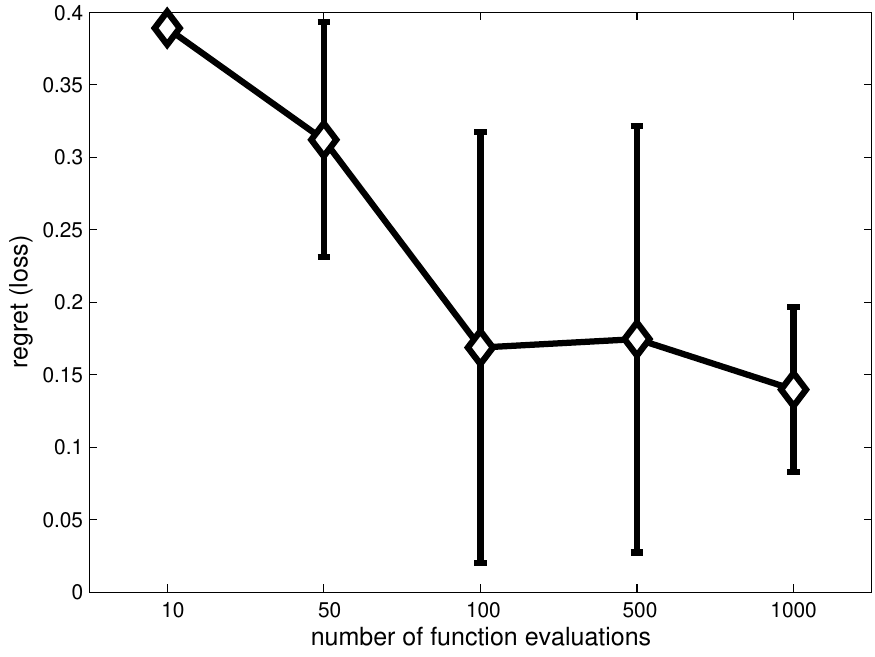} 
\caption{\StoSOO{}'s performance for function $f_1$. 
\textbf{Left}: Noised with  ${\cal N}_T(0,0.01)$. 
\textbf{Middle}: Noised with ${\cal N}_T(0,0.1)$.
\textbf{Right}: Noised with ${\cal N}_T(0,1)$. }
\label{fig:f1exp}
\end{center}
 \end{figure}
\vspace*{-0.2em}

 \begin{wrapfigure}{r}{0.2\textwidth}
 \begin{center}
  \vspace*{-2.7em}
\includegraphics[width=0.4\columnwidth]{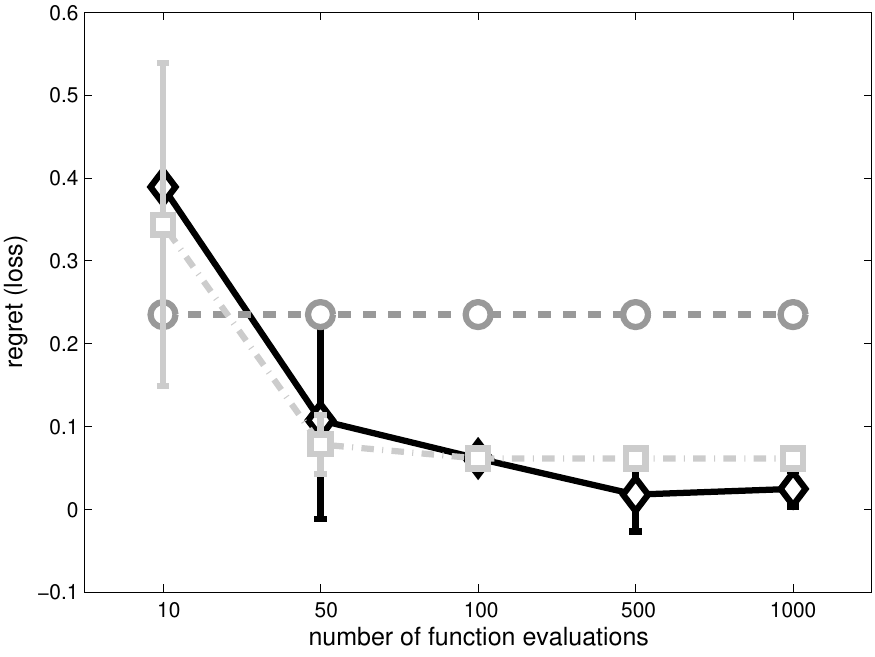}
 \end{center}
\caption{\StoSOO{} (diamonds) vs.\
Stochastic DOO with $\ell_1$ (circles) and $\ell_2$ (squares) on $f_1$.}
\label{fig:comparisons}
 \vspace*{-1em}
\end{wrapfigure}
In Figure~\ref{fig:comparisons}, we compare \StoSOO{}
to the straightforward stochastic version of DOO~\cite{munos2011optimistic},
where we expand each node after $\log(n^2/\delta)/(2\size(h)^2)$ evaluations
(i.e.\  when the size of the confidence interval becomes smaller than the
diameter $\size(h)$ of the cell).
However, (stochastic) DOO needs to know the semi-metric $\ell$
in order to define $\size(h)$.
We evaluate the performance of this stochastic DOO using two semi-metrics that
satisfy Assumption~\hyperref[ass:A1]{A1}: $\ell_1(x,y) = 12|x-y|$ (for which
$d=1/2$)
and $\ell_2(x,y) = 144|x-y|^2$ (for which $d=0$).
We observe that \StoSOO{}
performs as well as stochastic DOO
for the better metric \emph{without} the knowledge of it.

\vspace*{1em}
{\bf Garland function:}
Next, we consider a \textit{garland} function displayed in
Figure~\ref{fig:d0functions} (right).
 The optimization  of this function is challenging because $f_2$ is not
Lipschitz for any $L$. However its near-optimality dimension is still
$d=0$ (Section~\ref{sec:case.d=0.examples}).
Figure~\ref{fig:f2exp} shows the performance of \StoSOO{} as we vary the number of
the evaluations. Notice a higher variance at iteration 200 in the left plot; this is because for that many iterations,
\StoSOO{} was able to reach the depth $h=6$ but only for a few nodes (while only $h=5$ for less iterations)
with small number of $\lceil 200/(\log^3(200)) \rceil = 2$ evaluations.

 \vspace*{1em}

\begin{figure}[H]
 \begin{center}
 \includegraphics[width=0.45\columnwidth]{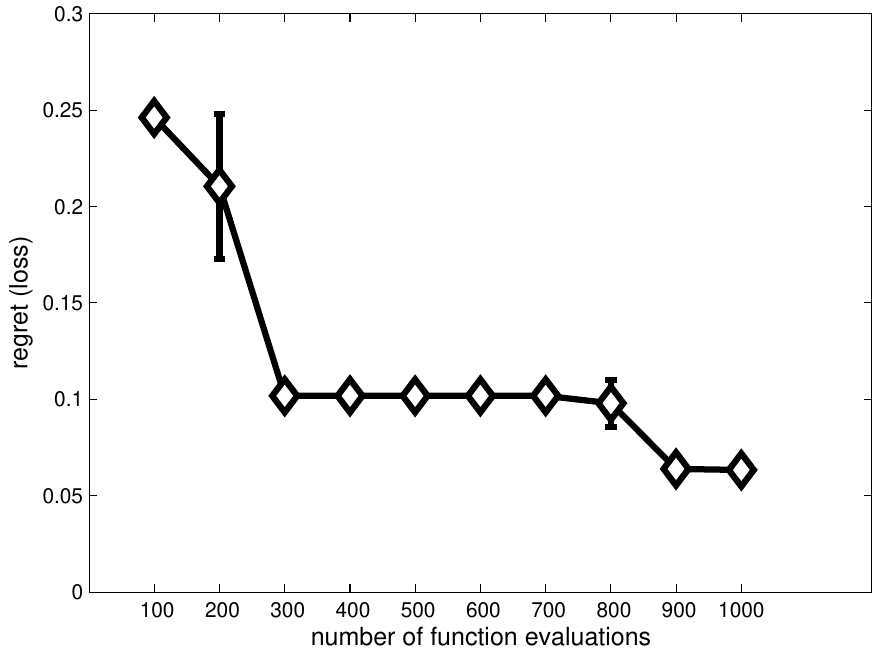} 
\quad
\includegraphics[width=0.45\columnwidth]{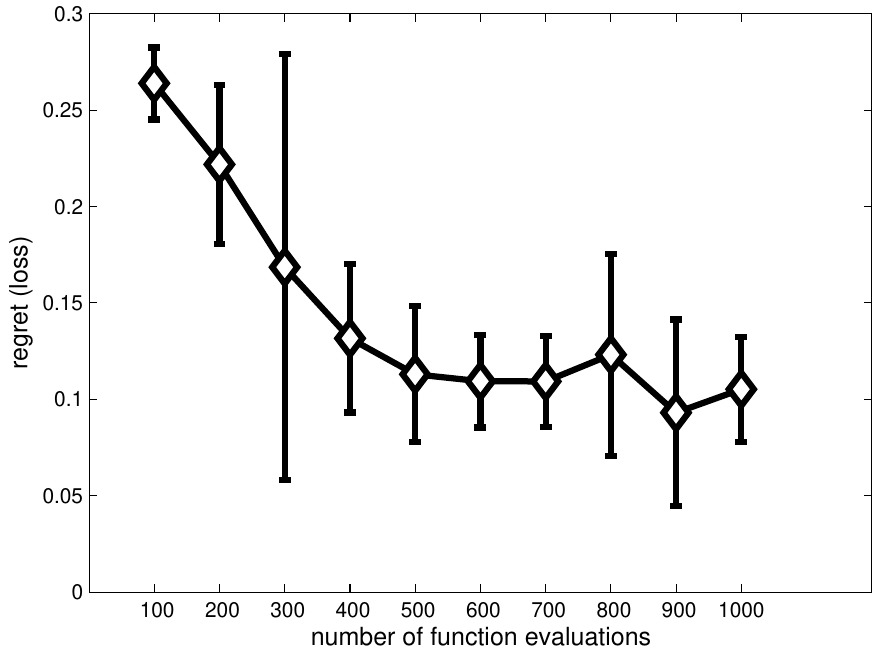} 
\caption{\StoSOO{}'s performance for the garland function.  
\textbf{Left} 
noised with  ${\cal N}_T(0,0.01)$. 
\textbf{Right}: Noised with ${\cal N}_T(0,0.1). $}
\label{fig:f2exp}
 \end{center}
 \end{figure}




\section{Conclusion}
We presented the \StoSOO{} algorithm that is able to optimize black-box stochastic functions, 
without the knowledge of their smoothness.
We derived a finite-time performance bound on the expected loss for the important
case when there exists a semi-metric such that the near-optimality dimension $d=0$.
We showed that this case corresponds to a large class of functions.
In such cases, the performance is almost as good as with an algorithm that would know the best valid semi-metric.
In the future we plan to derive finite-time performance for the case $d>0$.
\section{Acknowledgements}
\label{sec:Acknowledgements}
This research work presented in this paper was supported by European Community's
Seventh Framework Programme (FP7/2007-2013) under grant agreement n$^{\rm o}$
270327 (project CompLACS).


\newpage
\balance
\bibliography{library}
\bibliographystyle{icml2013}

\end{document}